\documentclass{article}

\usepackage{PRIMEarxiv}
\usepackage[utf8]{inputenc} 
\usepackage[T1]{fontenc}    
\usepackage{hyperref}       
\usepackage{url}            
\usepackage{booktabs}       
\usepackage{amsfonts}       
\usepackage{nicefrac}       
\usepackage{microtype}      
\usepackage{lipsum}
\usepackage{fancyhdr}       
\usepackage{graphicx}       
\usepackage[table,xcdraw]{xcolor}
\usepackage{enumitem}
\setlist{noitemsep}
\setlist{nolistsep}
\usepackage{amsmath}
\usepackage{amsthm}
\usepackage{multirow}
\usepackage{cleveref}
\graphicspath{{media/}}     
\usepackage[linesnumbered,ruled,vlined]{algorithm2e}
\bibliographystyle{unsrt}

\pagestyle{fancy}
\thispagestyle{empty}
\rhead{ \textit{ }} 
\fancyhead[LO]{Mitigating Data Heterogeneity in Federated Learning with Data Augmentation}

\title{Mitigating Data Heterogeneity in Federated Learning with Data Augmentation}

%


\author{Artur Back de Luca\thanks{Equal contribution.}\hspace{0.4em}$^{1}$, Guojun Zhang$^{*1}$, Xi Chen$^{1}$ and Yaoliang Yu$^{2}$\\
\texttt{\{arturb.luca, guojun.zhang,  xi.chen4\}@huawei.com} \\
\texttt{yaoliang.yu@uwaterloo.ca} \\
 $^1$Huawei Noah's Ark Lab \\ 
 $^2$University of Waterloo
}

\newcommand{\x}[1]{\theta^{(#1)}}
\newcommand{\Sc}{\mathcal{S}}
\newcommand{\Dc}{\mathcal{D}}
\newcommand{\Xc}{\mathcal{X}}

\newcommand{\Eb}{\mathbb{E}}
\newcommand{\Rb}{\mathbb{R}}

\def \supp {\mathop{\tt supp}}
\def \one {\mathbf{1}}
\def \zero {\mathbf{0}}
\def \proj {\mathop{\rm proj}}
\def \argmin {\mathop{\rm argmin}}

\newtheorem{thm}{Theorem}

\newtheorem{prop}[thm]{Proposition}

\begin{document}
\maketitle

\begin{abstract}
    Federated Learning (FL) is a prominent framework that enables training a centralized model while securing user privacy by fusing local, decentralized models.
    In this setting, one major obstacle is data heterogeneity, i.e., each client having non-identically and independently distributed (non-IID) data.
    This is analogous to the context of Domain Generalization (DG), where each client can be treated as a different domain.
    However, while many approaches in DG tackle data heterogeneity from the algorithmic perspective, recent evidence suggests that data augmentation can induce equal or greater performance.
    Motivated by this connection, we present federated versions of popular DG algorithms, and show that by applying appropriate data augmentation, we can mitigate data heterogeneity in the federated setting, and obtain higher accuracy on unseen clients.
    Equipped with data augmentation, we can achieve state-of-the-art performance using even the most basic Federated Averaging algorithm, with much sparser communication.
\end{abstract}

\section{Introduction}
With the increasing computation power of edge devices, Federated Learning (FL) \cite{mcmahan2017communication} provides a solution to data privacy constraints using distributed computation.
In this paradigm, each user (or client) has a dedicated model that is trained locally and solely on the said user's data collection.
During this process, a server is responsible for collecting these local models. By combining their parameters in a principled way, Federated Learning aims to generate a powerful centralized model.

One important aspect of this distributed setting is that each client's dataset has a unique manifestation,
which generally reflects a fraction of data distribution of all clients.
This implies that the samples of each client are not drawn from the combined distribution in an independently and identically distributed (IID) fashion (i.e.~non-IID).
For instance, a user's image collection may have certain subjects of interest, backgrounds, lighting conditions and viewpoints.
With this considerable limitation, local models will vary substantially, and when aggregated, will render poor generalization \cite{zhao2018federated}.

The compartmental view of data share similarities with the popular field of Domain Generalization (DG) \cite{blanchard2011generalizing}. In this framework, instead of having observations drawn from a single distribution, we assume that data is originated from multiple domains. With this setting, the goal of DG is to find a robust classifier that generalizes beyond the domains seen during training. Different strategies in Domain Generalization have successfully improved out-of-distribution performance by modifying the objective function accounting for the different domains \cite{arjovsky2019invariant, sagawa2019distributionally, krueger2021out}. Drawing a connection to the federated setting, each domain may correspond to one or more clients, and thus many DG algorithms can be transferred to the FL setting (See \S~\ref{sec:bridge}), though not always easily.

However, recent evidence \cite{gulrajani2020search} suggests that, with all conditions equal, when equipped with data augmentation, no algorithm outperforms the elementary Empirical Risk Minimization (ERM) algorithm. 
An intuitive explanation is that data augmentation acts as an invariance mechanism, discarding potential spurious correlations that are commonly encountered in each domain. 

Motivated by the aforementioned results, and considering the data heterogeneity present in the federated setting, in this work, we show that data augmentation also poses a straightforward yet effective solution to mitigate these differences in FL.
By selecting transformations that are pertinent to the overall distribution, we can apply data augmentation to each client data, generating more well-distributed and similar data distributions, while maintaining privacy.

\textbf{Contributions.} We study \emph{Federated Domain Generalization} as shown in Figure \ref{fig:federated_diagram}. Each client owns a proprietary dataset, and the goal is to learn a model through federated learning, such that it generalizes both in-domain (ID) and out-of-domain (OOD).\footnote{The term \textit{out-of-domain} is here used instead of the more prominent and equivalent term \textit{out-of-distribution} to keep consistency with the presented formulation and avoid potential ambiguities}
The in-domain datasets are the private ones of the participating clients during training, and a out-of-domain dataset resides in a client that has not participated in the federated training process.
This setting bridges the fields of Domain Generalization and Federated Learning.
Motivated by the success of data augmentation in DG, we show that, from a \emph{causal perspective}, data augmentation can alleviate the non-IIDness in FL, improving both generalization and convergence:

\begin{itemize}
\item \textbf{Generalization:} with \emph{proper} data augmentation, Federated Learning gives better OOD performance, and even the basic FedAvg algorithm compares favorably against state-of-the-art alternatives.
\item \textbf{Convergence:} Since the non-IIDness is mitigated with data augmentation, one can perform much sparser communication to achieve comparable performance as the centralized setting, with as few as two communication rounds.
\end{itemize}

\begin{figure}[ht]
    \centering
    \includegraphics[width=0.8\textwidth]{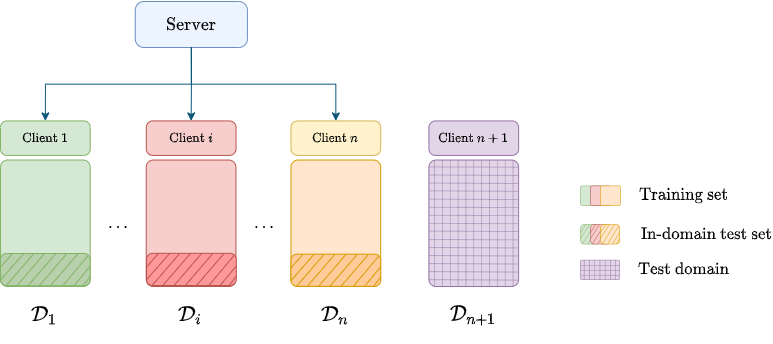}
    \caption{The setting of Federated Domain Generalization. By training on client training sets in a federated manner, we hope to learn a model that generalizes well both in-domain (i.e.~on in-domain test sets) and out-of-domain (i.e.~on the unseen test domain).}
    \label{fig:federated_diagram}
\end{figure}

\vspace{-0.3em}
\section{The non-IIDness of Federated Learning}\label{sec:noniid_fl}
\vspace{-0.3em}


One of the main challenges of FL is \emph{data heterogeneity}, i.e. the dissimilarity between each client's data distribution. Suppose $\Dc_i$ is the data distribution for client $i$, then the non-IIDness means that $\Dc_i$ and $\Dc_j$ could differ significantly for $i\neq j$, while in conventional IID distributed learning, $\Dc_i = \Dc_j$ for any $i\neq j$. The goal of federated learning is to find a global minimizer $\theta^*$ of the total loss,
\begin{align}
    \label{eq:fedlearn}
    \mbox{such that }\theta^* \in \underset{\theta}{\rm argmin}\, 
    F(\theta) := \frac{1}{n}\sum_i^n f_i(\theta),
    \quad \textrm{ with }
    f_i(\theta) := \Eb_{(x, y)\sim\mathcal{D}_i}\left[\ell(\hat{y}_\theta(x), y)\right].
\end{align}
Here $(x, y) \sim \Dc_i$ is an (input, label) pair drawn from $\Dc_i$, and $\ell$ is the loss function (e.g.~cross entropy) that measures the discrepancy between the model prediction $\hat{y}_\theta(x)$ and the true label $y$. The model $\theta$, usually parametrized as deep neural networks, takes an input $x$ and output a probability vector for prediction. In practice, we draw samples from each distribution $\Dc_i$ to estimate the expectation. 


In the IID case, the gradient $\nabla f_i$ is also an unbiased estimate of $\nabla F$, implying that the minimizer $\theta^*$ of (\ref{eq:fedlearn}) is also (approximately) a minimizer of the individual $f_i$'s.
To save communication cost, we can compute the \emph{one-shot average of local models} \cite{zinkevich2010parallelized, zhang2012communication}:
\begin{equation}
    \label{eq:oneshot}
    \overline{\theta} = \frac{1}{n}\sum_i^n \theta_i^*, \textrm{ with }\theta_i^* \in \underset{\theta}{\rm argmin}\, f_i(\theta),
\end{equation}
as a surrogate for $\theta^*$. The optimality gap $F(\overline{\theta}) - F({\theta}^*)$ can be controlled and decreases with the number of samples for each client \cite{zhang2012communication}.

However, under non-IIDness, the optimality gap can be much larger. To overcome this difficulty, \textit{Federated Averaging} (\textit{FedAvg}) \cite{mcmahan2017communication} proposes to trade the communication cost for better performance. Instead of performing local updates until convergence with one communication round, in FedAvg, we take more communication rounds with fewer local steps. 
At communication round $t$, the centralized model $\theta_t$ is dispatched to the clients, where it undergoes training over the local data. After $E$ local steps, each client transmits to the server its updated model $\theta_{t+1}^{(i)}$, which are then averaged:
\begin{align}
\textstyle 
\theta_{t+1} = \frac{1}{n}\sum_{i=1}^n \theta_{t+1}^{(i)}.
\end{align}



For the non-IID case, the pattern extraction performed by each local model is going to be distinct, and their aggregation is likely to produce sub-optimal performance \cite{pathak2020fedsplit}. As mentioned in Theorem 5 of \cite{khaled2020tighter}, under mild assumptions, the optimality gap can be bounded as:
\begin{align}
\textstyle F\left(\frac{1}{T}\sum_{t=1}^T \theta_t\right) - F(\theta^*) \leq O\left(\frac{1}{T}\right) +  {\tt const.}\times \sigma^2(\theta^*)
\end{align}
where the constant depends on the learning rate $\eta$, the number of local steps and etc. The non-IIDness, or the data heterogeneity is measured by the following quantity:
\begin{align}\label{eq:heterogeneity}
\textstyle \sigma^2(\theta^*) = \frac{1}{n}\sum_{i=1}^n \|\nabla f_i(\theta^*)\|^2.
\end{align}

Since FedAvg, there has been great effort devoted to devising better optimization algorithms for addressing the non-IIDness in FL. For instance, FedProx \cite{li2020federated} proposes to use the proximal operator instead of several steps of gradient descent; Agnostic FL \cite{mohri2019agnostic} proposes to minimize the worst-case loss function among clients. More related work about this direction can be found in \S~\ref{sec:related_work}.

\section{Bridging Domain Generalization and Federated Learning}\label{sec:bridge}

In this section, we introduce Domain Generalization (DG), and show that, despite seemingly disconnected, DG and FL share the same principles.

The problem of domain generalization \cite{blanchard2011generalizing,muandet2013domain} assumes that we are given $n$ source domains $\Dc_1, \dots, \Dc_n$ with labeled data. By training a model that behaves well on all source domains, one hopes to generalize to a new unseen target domain $\Dc_{n+1}$. The prominent algorithm for solving this problem is Empirical Risk Minimization (ERM) \cite{vapnik1992principles}:
\begin{align}
\label{eq:ERM}
\min_{\theta} \sum_{i=1}^n f_i(\theta), \textrm{ with }f_i(\theta) := \Eb_{(x, y)\sim \Dc_i} [\ell(\hat{y}_\theta(x), y)].
\end{align}



If we compare \eqref{eq:ERM} and \eqref{eq:fedlearn}, we can easily draw the conclusion that \emph{ERM and FedAvg essentially follow the same principle,} if we treat each client distribution in FL as a source domain in DG. The critical difference is that in FL, the client datasets are kept locally due to privacy restriction, but in DG these datasets can be shared. Therefore, \emph{FL algorithms can be treated as a special category of DG algorithms}. From this bridge, one can connect DG algorithms with FL algorithms, and bring insights to FL from the active field of domain generalization. 

Table \ref{tab:dg_fl} summarizes the connection between DG and FL. As another example, GroupDRO \cite{sagawa2019distributionally} and AFL \cite{mohri2019agnostic} follow the same principle of minimizing the worst-case client (or source domain):
\begin{align}\label{eq:AFL}
\min_{\theta} \max_{\lambda \in \Delta} \sum_{i=1}^n \lambda_i f_i(\theta),
\end{align}
where $\Delta := \{\lambda \in \Rb^n: \one^\top \lambda = 1, \lambda \geq \zero\}$ is the probability simplex. The probability simplex can be generalized to $\Delta(\lambda_{\min}) :=\{\lambda \in \Rb^n: \one^\top \lambda = 1, \, \lambda \geq \lambda_{\min}\one \}$, from which the DG algorithm called MM-REx \cite{krueger2021out} has been designed. We can similarly write down the federated version, that we call \emph{Generalized-AFL} (Gen-AFL).
Moreover, we use this connection to devise another algorithm, named \emph{Variance Minimization} (VM), from \cite{krueger2021out}. Essentially, by replacing the gradient $\nabla f_i$ with the \emph{pseudo-gradient}, i.e., the sum of local gradient steps for client $i$, one can obtain the corresponding FL algorithm from a DG algorithm. However, not every DG algorithm can be easily transferable. For example, the popular IRM \cite{arjovsky2019invariant} algorithm suffers from gradient sensitivity after we move it to the FL setting. Full details of Gen-AFL, VM, and IRM can be found in Appendix~\ref{sec:fl_dg_alg}.


\begin{table*}[tb]
    \caption{Federated learning (FL) algorithms vs.~Domain Generalization (DG) algorithms. {\bf ERM:} Empirical Risk Minimization; {\bf GroupDRO:} Group distributional robust optimization; {\bf MM-REx:} Minimax Risk Extrapolation; {\bf V-REx:} Variance Risk Extrapolation; {\bf FedAvg:} Federated Averaging; {\bf AFL:} Agnostic Federated Learning; {\bf Gen-AFL:} Generalized AFL; {\bf VM:} Variance minimization.}
    \centering
\begin{tabular}{ccccc}
\toprule 
\textbf{Domain}         & ERM                                              & GroupDRO                                           & MM-REx                                 & V-REx                                  \\
\textbf{Generalization} & \cite{vapnik1992principles}     & \cite{sagawa2019distributionally} & \cite{krueger2021out} & \cite{krueger2021out} \\ \midrule
\textbf{Federated}      & FedAvg                                           & AFL                                                & \multirow{2}{*}{Gen-AFL}                                & \multirow{2}{*}{VM}                                     \\
\textbf{Learning}       & \cite{mcmahan2017communication} & \cite{mohri2019agnostic}          &                               &                               \\ \bottomrule
\end{tabular}%
    \label{tab:dg_fl}
\end{table*}



\subsection{Data augmentation in domain generalization}

The similarity between domain generalization and federated learning motivates us to bring insights from one field to the other. As explained in \cite{gulrajani2020search}, \emph{with proper data augmentation, ERM has the state-of-the-art performance, compared to a large variety of DG algorithms}. Does the same conclusion hold for federated learning, with the private source domains and sparse communication? To answer this question, we study the effect of data augmentation in \S~\ref{sec:DA4FL}.

Due to the similarity between DG and FL, we will use the word ``client'' and ``environment'' interchangeably when not causing confusion. 


\vspace{-0.3em}
\section{Data Augmentation for Federated Learning}\label{sec:DA4FL}
\vspace{-0.3em}

Inspired by the success of data augmentation in domain generalization, in this section, we propose to mitigate the non-IIDness in federated learning from the data perspective. This is orthogonal to the algorithmic approach to federated learning, e.g.~\cite{mohri2019agnostic, li2020federated}.

Our key observation is that data augmentation can reduce the heterogeneity of client distributions. As discussed in \S~\ref{sec:noniid_fl}, federated learning is quite challenging in the presence of non-IID data. By applying proper data augmentation to each local client, the client distributions are more similar and 
FL will suffer less from the non-IID issue. 

\textbf{Data augmentation} is a widely used strategy in machine learning for increasing the diversity of samples. It often allows us to obtain more robust models with better performance effectively. In this work, we define data augmentation as transformations in the input space. More specifically, denote $\Xc$ as the input space, a \emph{data augmentation} function $T$ is defined as $T: \mathcal{X}\rightarrow\mathcal{X}$. In supervised learning, data augmentation is typically constructed to preserve ground-truth labels.


\vspace{-0.3em}
\subsection{Data augmentation with different environments: a causal approach}\label{sec:causal}
\vspace{-0.3em}

To better understand how data augmentation can improve similarity among different clients (environments), we propose a \emph{causal model} that explains this approach, as illustrated in Figure \ref{fig:causal_diagram_rmnist}. 
The key observation we make in the causal model is that all environments are generated from a common cause $Z$, which is a random variable. This random variable is sometimes called the \emph{semantic content} \cite{gatys2016image}, which is generated from the true label $Y$. For example, we could generate the digit images ($Z$) from digit labels ($Y$). This data generating principle is also known as anti-causal \cite{scholkopf2012causal}.

However, individuals write digits differently, e.g.~handwriting can vary. Besides, even the same person can write the same digit in different ways. Therefore, for a client $i$, we use a random variable $\varepsilon_i$ to account for these variations.
The causal mechanism allows us to generate samples for each client, represented by random variable $X_i$:
\begin{align}
X_i := g(Z, \varepsilon_i), \, \textrm{ with }Z\perp \varepsilon_i.
\end{align}
This is known as the \textbf{Structural causal Model} (SCM) \cite{peters2017elements}. Note that $Z$ and $\varepsilon_i$ are independent, and the causal mechanism $g$ remains invariant across environments. For example, in Rotated MNIST \cite{ghifary2015domain} $Z$ represents the original digit images, and each $\varepsilon_i$ represents a fixed angle. The function $g$ is the rotation operation of $Z$ with the angle information given by $\varepsilon_i$. 

So far, we have talked about the data generation process from the causal perspective.
Now we discuss the importance of data augmentation in federated domain generalization, which is two-fold.
First, we use data augmentation to \emph{homogenize} different domains that contain the same semantic meaning but various environmental factors (e.g.~the rotation angle in Rotated MNIST, the style of painting the same object). Essentially, if we have a general sense of the underlying SCM, then we can take advantage of it to generate new samples, by applying proper data augmentation. In the ideal case, the data augmentation $T$ for each $X_i := g(Z, \varepsilon_i)$ induces a transformation $T'$ for $\varepsilon_i$, without modifying $Z$. Eventually, we want all the transformed environmental variables $T'(\varepsilon_i)$ to match an environment-agnostic variable $\overline{\varepsilon}$.
With clients distributions more aligned, we have less data heterogeneity and therefore, better \emph{convergence}.

Augmentation also has the role of promoting \emph{generalization}.
The data augmentation $T$ can expand the training distributions, potentially incorporating unseen domains, thus entailing better OOD generalization.

\begin{figure}[ht]
    \centering
    \includegraphics[width=0.9\textwidth]{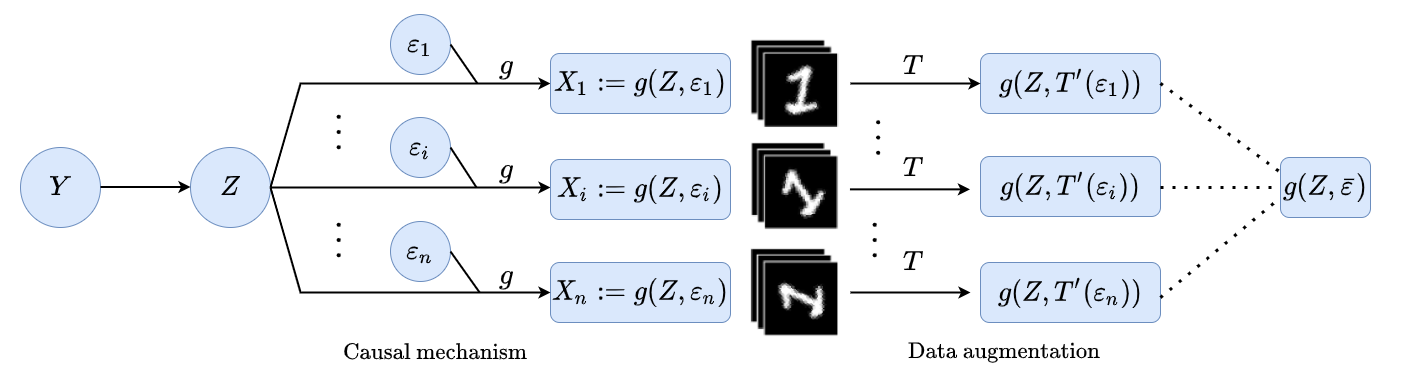}
    \caption{A causal model for data augmentation. $Y$: the random variable for the label; $Z$: the random variable for the semantic meaning; $X_i$: the random variable for the samples in environment $i$; $\varepsilon_i$: the random variable for the environmental factor in environment $i$; $X_i = g(Z, \varepsilon_i)$ denotes the structural causal equation for the data generating process; See \S~\ref{sec:rmnist} for a concrete example.}
    \vspace{-0.8em}
    \label{fig:causal_diagram_rmnist}
\end{figure}

Choosing the ``right'' data augmentation to the ``right'' extent is vital for our approach, after all, data augmentation needs \emph{prior knowledge} of the data generating process $g(Z, \varepsilon)$, which may not be possible.
In such cases, data transformations may not be capable of mapping all environmental factors to an agnostic representation.
Nevertheless, even with augmentations marginally related to the causal structure, we can still expand training distributions, increasing their overlap and favoring generalization.
We further demonstrate this in the following case study, as well as in the experimental results of \S~\ref{sec:exp}.

\subsection{Rotated MNIST: a case study}\label{sec:rmnist}

\textbf{Formulation.} We give a concrete example for the causal model with the Rotated MNIST dataset \cite{ghifary2015domain}. In this example, we have:
\begin{align}
&Y: \textrm{the label of the digits;  }Z: \textrm{the original MNIST digit images; } \\
&\varepsilon_i: \textrm{the rotation angle for the digits;  }X_i: \textrm{the rotated digit images}.
\end{align}
Specifically, each $\varepsilon_i$ follows a Dirac distribution $\delta_{\varepsilon = t_i}$, with $\{t_1, t_2, \dots, t_5\} = \{0^\circ, 15^\circ, \dots, 60^\circ\}$. The data generating function $g$ now is:
\begin{align}
g(Z, \varepsilon_i): \textrm{ rotate $Z$ with angle $\varepsilon_i$}.
\end{align}
The data augmentation we consider is:
\begin{align}\label{eq:DA_RMNIST}
T_{\alpha}(X_i): \textrm{ randomly \emph{rotate} $X_i$ with angle $\beta$, which is uniformly drawn from }[-\alpha, \alpha].
\end{align}
Let us now study how the data augmentation encourages different client distributions to be similar. We can see that after the data augmentation, we have:
\begin{align}
T_\alpha(g(Z, \varepsilon_i)) = g(Z, T'_\alpha(\varepsilon_i)), \textrm{ where } T'_{\alpha}(\varepsilon_i): \textrm{ randomly \emph{add} $\varepsilon_i$ with angle }\beta\sim U[-\alpha, \alpha].
\end{align}

This equation tells us that applying the data augmentation $T_\alpha$ onto $g(Z, \varepsilon)$ is equivalent to applying a related transformation $T'_\alpha$ directly on the environmental variable $\varepsilon$. In general, based on our (perhaps incomplete) prior knowledge of the data generation $g$, we can devise appropriate data augmentation $T$ that solely induces operation on $\varepsilon_i$. However, if the data augmentation $T$ is not related to the data generation process $g$, such as Gaussian blur, then applying $T$ onto $g(Z, \varepsilon)$ may reduce data heterogeneity,
but will also inevitably modify the semantic variable $Z$, and degrade the generalization (see also \S~\ref{sec:exp}). 




\textbf{Proper data augmentation can homogenize client domains.} We now show that the domains for clients are indeed more similar after proper data augmentation.
Since the data generation function $g$ remains the same throughout, it suffices to measure the distance between the joint distributions of $(Z, \varepsilon_i)$. We use total variation as the measure. Suppose we have two distributions with density functions $p_1$ and $p_2$, and the union of their supports to be $\supp$. Then the total variation (TV) distance between the two distributions is:
\begin{align}\label{eq:def_TV}
\textstyle d_{\rm TV}(p_1, p_2) = \int_{u\in \supp}|p_1(u) - p_2(u)| du.
\end{align}


From the independence assumption of our SCM model, the joint distribution of $(Z, \varepsilon)$ is the product of marginals. Hence, for $(Z, \varepsilon_1) \sim p_Z \times p_1$ and $(Z, \varepsilon_2) \sim p_Z \times p_2$, the TV distance is:
\begin{align}
d_{\rm TV}( p_Z\times p_1, p_Z \times p_2) 
= d_{\rm TV}(p_1, p_2).
\end{align}
Therefore, it suffices to study the TV distance for the distributions of the environmental variable $\varepsilon$. 
Before the data augmentation, the TV distance is between two Dirac distributions $\delta_{\varepsilon = t_1}$ and $\delta_{\varepsilon = t_2}$:
\begin{align}
d_{\rm TV}(\delta_{\varepsilon = t_1}, \delta_{\varepsilon = t_2}) = 2.
\end{align}
We use $U(a, b)$ to represent a uniform distribution on the interval $[a, b]$. After the data augmentation $T_\alpha$, the TV distance is between two uniform distributions, $U(t_1 - \alpha, t_1 + \alpha)$ and $U(t_2 - \alpha, t_2 + \alpha)$. WLOG, we assume that $t_1 < t_2$. With the definition of uniform distribution, we find that:
\begin{align}
d_{\rm TV}(U(t_1 - \alpha, t_1 + \alpha), U(t_2 - \alpha, t_2 + \alpha)) = \frac{t_2 - t_1}{\alpha} \textrm{ if }\alpha > \frac{t_2 - t_1}{2},
\end{align}
and $d_{\rm TV}(U(t_1 - \alpha, t_1 + \alpha), U(t_2 - \alpha, t_2 + \alpha)) = 2$ otherwise. This means that with \emph{increasing data augmentation} (larger $\alpha$), different domains would look more similar. 



\section{Experiments}\label{sec:exp}

In this section, we examine the generalization performance of FL algorithms and show how data augmentation addresses the non-IIDness in federated learning, and thus the communication issue.
\subsection{Experimental setup}\label{sec:experiment_setup}

Since our task is federated domain generalization, we borrow datasets from domain generalization to set up our federated learning experiments. We assume full participation throughout.  

\textbf{Rotated MNIST \cite{ghifary2015domain}} is a domain generalization dataset adapted from MNIST \cite{lecun1998mnist}. In this dataset, each domain 
has its digits rotated by a fixed angle, from $\{0^\circ, 15^\circ, 30^\circ, 45^\circ, 60^\circ, 75^\circ\}$. 
In-distribution performance is evaluated over a validation set, 10\% of the training domain, while the domain of 75$^\circ$ is set aside for OOD.
For this task, we use the convolutional network as reported in \cite{zhang2022equality}, trained with batch size of 64 and learning rate of {\tt 1e-3} with the Adam optimizer \cite{kingma2015adam}. For each client, we take the local steps $E = 200$ \cite{mcmahan2017communication}.

\textbf{PACS \cite{li2017deeper}}
is an image dataset categorized into $10$ classes that are scattered across four different domains, each having a distinct trait: photograph (P), art (A), cartoon (C) and sketch (S).
For this experiment, the sketch domain (S) is set aside for testing.

For the federated experiments in PACS, the training domains are further divided into four subsets, with 12 clients in total. The splitting method is based on \cite{hsu2019measuring}, using a symmetric Dirichlet distribution (with parameter $200$) to assign examples from each class in non-IID fashion.
For this task we use ResNet-18 \cite{he2016deep}, pretrained on ImageNet, with batch size of 32 and learning rate of {\tt 5e-5}. We use the Adam optimizer \cite{kingma2015adam} and for each client, we take the local step $E = 200$ \cite{mcmahan2017communication}.

\textbf{OfficeHome \cite{venkateswara2017deep}} is analogous to PACS, having four distinct image domains: Art (A), ClipArt (C), Product (P) and Real-World (R). However, in OfficeHome the number of classes is much greater, with 65 categories in total. In this experiment, we use the same settings as reported in PACS, using the ClipArt domain for testing. For further discussion on the datasets and experiments, see App.~\ref{sec:benchmark}.

\begin{figure}[h]
    \centering
    \includegraphics[width=\textwidth]{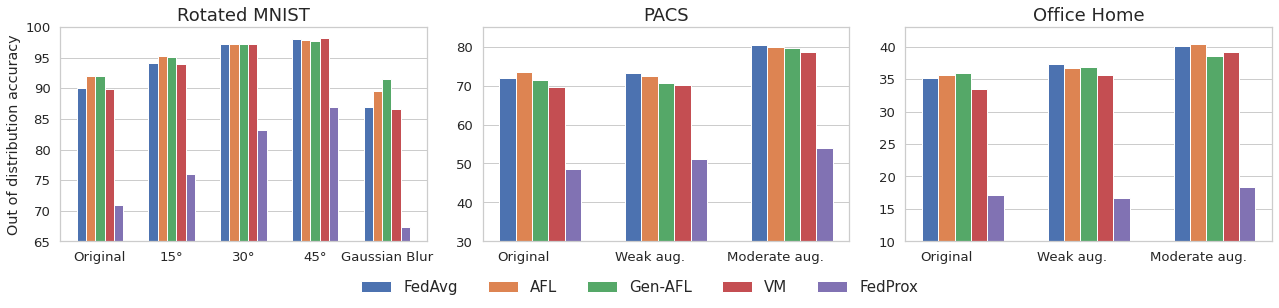}
    \caption{Out of distribution accuracy of federated algorithms across different data augmentation schemes. Augmentations are further described in \S~\ref{sec:augmentations}.}
    \label{fig:results_experiment}
    \vspace{-0.5em}
\end{figure}

\subsection{The importance of an appropriate augmentation}\label{sec:augmentations}

\paragraph{Augmentation methods used.} As reported in \S~\ref{sec:rmnist}, akin to the causal mechanism that generates different domains, in Rotated MNIST, we apply increasing levels of random rotation in \eqref{eq:DA_RMNIST}, with $\alpha \in \{0^\circ, 15^\circ, 30^\circ, 45^\circ, 60^\circ\}$. We also measure the effect of applying uncorrelated transformations, such as Gaussian blur. Since the causal mechanism is not as patent in PACS and OfficeHome, we measure the effect of stacking multiple transformations that are pertinent in distinguishing the different domains and do not compromise the semantic content.
We first apply positional transformations, such as random cropping and horizontal flipping, here denoted as \emph{weak augmentation}.
In \emph{moderated augmentation}, we further include color transformations, such as random conversion to grayscale and color jittering. 

In general, out-of-domain accuracies increase with stronger levels of suitable data augmentation (see Figure \ref{fig:results_experiment}). 
This verifies that suitable data augmentation can significantly improve out-of-domain generalization, even in the federated learning setting. As we explained in \S~\ref{sec:rmnist}, this is because 
our data augmentation alleviates the data heterogeneity of training and test clients.

However, for transformations that are irrelevant to the task, such as Gaussian blur in Rotated MNIST, the performance is even worse than no data augmentation.
This shows us the importance of choosing the right data augmentation that is relevant to the data generating process, as otherwise data augmentation can even degrade generalization. From the causal perspective (Figure \ref{fig:causal_diagram_rmnist}), irrelevant augmentation also changes the semantic variable $Z$. 

\vspace{-0.3em}
\subsection{Data augmentation and data heterogeneity}\label{sec:grad_norm}
\vspace{-0.3em}

As discussed in \S~\ref{sec:noniid_fl}, federated algorithms may not converge to the optimal solution under non-IIDness.
However, one effect we observe is that with appropriate augmentation, federated learning can achieve comparable performance to the 
centralized setting (see Figure \ref{fig:steps}). 
This is possible because, in such cases, data augmentation increases the similarity of local distributions, shrinking the data heterogeneity (non-IIDness) among clients.

\begin{figure}[h]
    \centering
    \includegraphics[width=\textwidth]{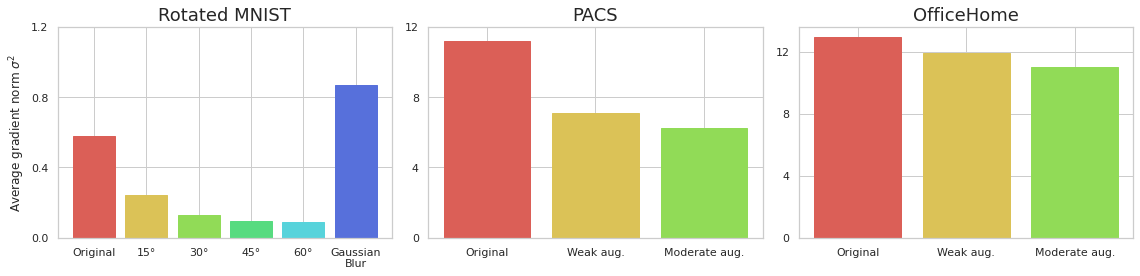}
    \caption{The averaged gradient norm of \textit{FedAvg} across domains under different data augmentation methods, as described in \S~\ref{sec:augmentations}.}
    \label{fig:grad_norm}
\end{figure}

To support this claim, we adopt \eqref{eq:heterogeneity} from \cite{khaled2020tighter} for measuring data heterogeneity.
We use the global model obtained by \textit{FedAvg} in each augmentation scheme as a proxy of the optimal model $\theta^*$.
We estimate the gradient norm of the global model across all in-domain and out-of-domain test sets and take their average.
With increasing levels of proper augmentation, we observe that the averaged gradient norm $\sigma^2$ decreases, indicating that the augmented domains are more similar. However, if the augmentation is not relevant to the environmental variable (see Figure \ref{fig:causal_diagram_rmnist}), such as Gaussian blur for Rotated MNIST, then the augmentation does not help to alleviate the non-IIDness. 

\vspace{-0.3em}
\subsection{Data augmentation for communication efficiency}
\vspace{-0.3em}

The reduction of data heterogeneity has an additional advantage in the federated setting.
By augmenting the clients' data, and thus generating more similar data distributions, local models can be trained in isolation for longer without suffering from weight divergence, as reported in \cite{zhao2018federated}.
To validate this conjecture, we propose the following experiment using the Rotated MNIST dataset and Federated Averaging.
We test the effect of increasing local steps, and decreasing communication rounds, on out-of-domain generalization, with varying degrees of augmentation, as shown in Figure \ref{fig:steps}.
Our results demonstrate that, with the right augmentation, we can obtain comparatively good performance with as few as two rounds of communication, as it is shown in $E=128000$ in Figure \ref{fig:steps}.

\begin{figure}[h]
    \centering
    \includegraphics[width=0.8\textwidth]{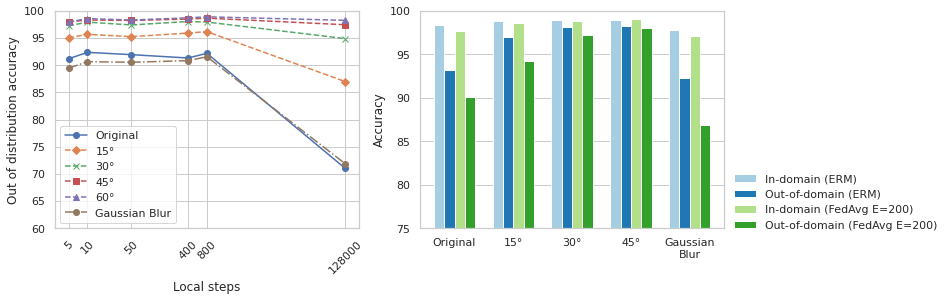}
    \vspace{-0.3em}
    \caption{The effect of data augmentation on out-of-distribution accuracy of \textit{FedAvg} across different steps ({\bf left}) and on the gap between centralized and federated training ({\bf right}).}
    \label{fig:steps}
\end{figure}

In the case where each local gradient step is followed by a communication round, i.e., with $E=1$ local step, federated learning reduces to centralized training. However, under suitable augmentations, we show that the performance gap between centralized and federated learning decreases.
This further motivates the search for effective transformations in more practical settings.

\textbf{In-domain vs.~out-of-domain.} Last but not least, in both centralized and federated cases, we find that in-domain test accuracy is greater than out-of-domain (Figure \ref{fig:steps}), a gap of which is further enlarged in federated training. This emphasizes the challenges of OOD generalization in federated learning. As mentioned before, one potential approach is proper data augmentation, with which we can show that the gap between in-domain and out-of-domain accuracy diminishes. 
%
%

\section{Related work}\label{sec:related_work}


\textbf{Federated Learning:}
In the seminal work \cite{mcmahan2017communication}, despite claiming that \textit{Federated Averaging} (\textit{FedAvg}) is able to cope with non-IID data distributions, extensive empirical evidence suggests a degradation in performance under such settings \cite{hsieh2020non, zhao2018federated}. This is further supported by \cite{li2020convergence} where they demonstrate the convergence of \textit{FedAvg} to be linearly affected by the data heterogeneity.
This is due to the divergence of various client models when trained with data heterogeneity \cite{zhao2018federated}.
Since then, one prolific line of work has been proposing changes to the aggregation scheme of local models \cite{wang2020federated, wang2020tackling, yu2021fed2} or the objective function \cite{li2020federated} to account for these differences.
Other techniques consist of dynamically increasing the communication frequency \cite{hsieh2020non} or introducing a subset of data shared across devices to mitigate model divergence \cite{zhao2018federated}, despite being less viable in practice.
Our formulation of Federated Domain Generalization also relates to the fields of Federated Domain Adaptation \cite{peng2019federated, yao2022federated}  and Federated Transfer Learning \cite{liu2020secure, saha2021federated}, where the concept of different domains is also present.

\textbf{Domain Generalization:} One of the primary assumptions of Domain Generalization is that data is not IID across domains.
However, contrary to Federated Learning, this is typically deemed beneficial, as it can be exploited to design more robust predictors. 
DG techniques typically fall into one of three categories: data manipulation, representation learning, and learning strategy \cite{wang2021generalizing}.
The first aims to learn a general representation across domains by manipulating the inputs either via data augmentation \cite{tobin2017domain, volpi2018generalizing}, or data generation \cite{qiao2020learning, rahman2019multi}.
In representation learning, the goal is to learn a predictor while guaranteeing some stable property across domains, achieved through various techniques, such kernel-based methods \cite{blanchard2017domain}, domain adversarial learning \cite{ganin2016domain}, explicit feature alignment \cite{zhou2020domain}, or invariant predictors \cite{peters2016causal, gong2016domain, arjovsky2019invariant, christiansen2021causal, ilse2021selecting, wang2022out, ilse2021selecting}.
The latter also has formulations connected to FL \cite{francis2021towards}.
The category of learning strategy gathers techniques that employ different strategies to the learning task, such as using meta-learning \cite{finn2017model}, or distributionally robust optimization \cite{sagawa2019distributionally, krueger2021out}, among others.

\textbf{Data Augmentation:} Examples abound on the employment of data augmentation for generalization \cite{shorten2019survey} and self-supervised learning \cite{chen2020simple, he2020momentum, Grill2020bootstrap}. 
Its most elementary use consists of applying handcrafted augmentations, whose success depends on the relationship with the task in hand. For this reason, the choice of augmentation is usually done based on prior knowledge, although some works \cite{ratner2017learning, cubuk2019autoaugment} have attempted to devise a transformation strategy during training.
In more recent work, the goal is to learn such task-related augmentations.
Among different strategies, these may depend on generative models \cite{sixt2016rendergan, antoniou2017data}
or adversarial training \cite{volpi2018generalizing, li2021simple}. In terms of theory, data augmentation can be linked to regularization theory \cite{bishop1995training, todd1994from}, group theory \cite{chen2020group}, as well as to the notion of causality \cite{ilse2021selecting, scholkopf2021towards, wang2022out}.

\textbf{Causal modelling for domain generation}: several approaches attempt to formulate the data generating process of different domains using a causal model.
Examples abound in learning tasks in both causal and anti-causal directions \cite{peters2016causal, heinze2017conditional, rojas2018invariant, gong2016domain, arjovsky2019invariant, scholkopf2012causal}.
Similar to our work, in \cite{heinze2017conditional, mitrovic2020representation, von2021self, ilse2021selecting} the input is causally related to the semantic content and an environmental factor (also denoted as style).
Data augmentation is used in \cite{wang2022out, von2021self, mitrovic2020representation} as a perturbation over the style —also seen as an intervention in \cite{ilse2021selecting}—to promote model invariance.
Further, \cite{besserve2018group} uses group theory to formalize the notion of data augmentation, devising a general principle to learn generative causal models.


\section{Conclusion}

In this paper, we study the out-of-domain generalization of federated learning, and propose to use data augmentation for solving the non-IIDness. This approach is inspired from the connection to domain generalization, from which we can borrow insights for designing FL algorithms as well. Our key observation is that, with proper data augmentation, the client domains are more similar and therefore the FL training suffers less from non-IIDness and OOD generalization can be improved. We demonstrate this point through extensive experiments. Our approach is distinct from the popular line of FL research on better optimization and aggregation, and focuses more on the data side. We hope this data-driven perspective can inspire more exploration into designing proper data augmentation for federated learning and OOD generalization in general. 


\bibliography{refs}

\newpage

\appendix

\section{Connecting Federated Learning and Domain Generalization Algorithms}\label{sec:fl_dg_alg}

In this appendix we provide more details about the FL algorithms that we transfer from the field of domain generalization, including Generalized AFL (Gen-AFL), Variance Minimization (VM) and Federated IRM (Fed-IRM). 
We summarize the connections drawn in this paper between Federated Learning and Domain Generalization algorithms, as well as their objective function formulation in Table \ref{tab:fl_dg_appdx}.

\begin{table}[h]
\caption{FL algorithm vs.~DG algorithm.}
\centering
\begin{tabular}{ccccc}
\toprule 
\bf FL algorithm & \bf DG algorithm & \bf Objective \\
\midrule
FedAvg \cite{mcmahan2017communication} & ERM \cite{vapnik1992principles} & $\sum_i f_i$ \\ 
Agnostic FL \cite{mohri2019agnostic} & GroupDRO \cite{sagawa2019distributionally} & $\max_i f_i$ \\
\it Generalized AFL  & MM-REx \cite{krueger2021out} & $(1 - n \lambda_{\min}) \max_i f_i + \lambda_{\min} \sum_i f_i$ \\
\it Variance Minimization & V-REx \cite{krueger2021out}& $\beta {\rm var}(f) + {\rm mean}(f)$ \\
\it Fed-IRM & IRM \cite{arjovsky2019invariant} & $\sum_i f_i(\sigma(w^\top g)) + \|\nabla_w f_i(\sigma(w^\top g))\|^2$ \\
\bottomrule
\end{tabular}
\label{tab:fl_dg_appdx}
\end{table}

\subsection{Generalized AFL}\label{sec:gen_AFL}

In domain generalization, one extension of GroupDRO is Minimax Risk Extrapolation (MM-REx, \cite{krueger2021out}), where one replaces the simplex $\Delta$ in \eqref{eq:AFL} with $\Delta(\lambda_{\min}) :=\{\lambda\in \Rb^n:\one^\top \lambda = 1, \lambda \geq \lambda_{\min}\one\}$. MM-REx can be treated as a generalization of GroupDRO and ERM: if $\lambda_{\min} = 0$ then we have GroupDRO (AFL), where the highest weight is put on the worst-case client;  if $\lambda_{\min} > 0$, then we want our algorithm to be more uniform (specifically if $\lambda_{\min} = 1/n$, then we have ERM (FedAvg)). The extrapolation occurs when we have $\lambda_{\min} < 0$: we enforce larger weights on the worst-case clients and negative weights on the best clients. In other words, we are extrapolating beyond the convex hull of the client distributions.

In \Cref{alg:AFL} we define Generalized AFL, where $\lambda_{\min} < 1/n$ is a hyperparameter. This algorithm is adapted from MM-REx. The gradient steps can be either SGD or Adam. Note that line 8-10 is based on the following result:
\begin{prop}\label{prop:proof}
Define $\Delta(\lambda_{\min}) :=\{\lambda \in \Rb^n: \one^\top \lambda = 1, \, \lambda \geq \lambda_{\min}\one\}$,  with $\lambda_{\min} < 1/n$. The Euclidean projection of an $n$-dimensional vector $\lambda$ onto $\Delta(\lambda_{\min})$ can be reduced to:
\begin{align}\label{eq:prop_gen_afl}
\proj_{\Delta(\lambda_{\min})}(\lambda) = (1 - n \lambda_{\min}) \proj_{\Delta_0}\left(\frac{\lambda - \lambda_{\min}\one}{1 - n \lambda_{\min}}\right) + \lambda_{\min} \one,
\end{align}
where $\Delta_0$ is the usual $n-1$-dimensional simplex.
\end{prop}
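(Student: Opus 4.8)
The plan is to reduce the projection onto $\Delta(\lambda_{\min})$ to the projection onto the standard simplex $\Delta_0$ by exhibiting an affine bijection between the two sets, and then invoking the fact that Euclidean projection is equivariant under positive scaling combined with translation. Concretely, I would set $c := 1 - n\lambda_{\min}$, which is strictly positive precisely because $\lambda_{\min} < 1/n$, and define the affine map $\phi(\mu) := c\,\mu + \lambda_{\min}\one$. The target identity \eqref{eq:prop_gen_afl} is then exactly the statement that projecting onto $\Delta(\lambda_{\min})$ equals $\phi$ applied to the projection onto $\Delta_0$ of the preimage point, so the whole proof amounts to justifying that $\phi$ transports projections.

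First I would verify that $\phi$ is a bijection from $\Delta_0$ onto $\Delta(\lambda_{\min})$. For $\mu \in \Delta_0$ one checks $\one^\top \phi(\mu) = c\,\one^\top\mu + \lambda_{\min} n = c + n\lambda_{\min} = 1$, so the equality constraint is preserved, and $\phi(\mu) = c\,\mu + \lambda_{\min}\one \geq \lambda_{\min}\one$ since $c>0$ and $\mu \geq \zero$, so the inequality constraint holds as well; the inverse $\phi^{-1}(\lambda) = (\lambda - \lambda_{\min}\one)/c$ symmetrically sends $\Delta(\lambda_{\min})$ back into $\Delta_0$ by the same two checks. Next comes the core of the argument, the equivariance of the projection. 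For any point $\lambda$ and any $w \in \Delta_0$, writing $z = \phi(w) \in \Delta(\lambda_{\min})$, the factoring
\[
\|z - \lambda\|^2 = \|c\,w + \lambda_{\min}\one - \lambda\|^2 = c^2\,\big\|w - \phi^{-1}(\lambda)\big\|^2
\]
shows that minimizing $\|z-\lambda\|^2$ over $z \in \Delta(\lambda_{\min})$ is equivalent, up to the positive constant $c^2$, to minimizing $\|w - \phi^{-1}(\lambda)\|^2$ over $w \in \Delta_0$. Since both sets are nonempty closed convex subsets of the affine hyperplane $\one^\top(\cdot)=1$, the projections are unique, so the minimizer satisfies $\proj_{\Delta(\lambda_{\min})}(\lambda) = \phi\big(\proj_{\Delta_0}(\phi^{-1}(\lambda))\big)$, which is \eqref{eq:prop_gen_afl} after substituting the definitions of $\phi$ and $c$.

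The only genuine subtlety, and the step I would be most careful about, is the algebraic factoring $c\,w + \lambda_{\min}\one - \lambda = c\,(w - \phi^{-1}(\lambda))$ together with the observation that $c^2 > 0$ leaves the $\argmin$ unchanged; this is what lets the bijection commute with the projection rather than merely map the feasible sets to one another. The positivity $c > 0$ is essential here and is exactly where the hypothesis $\lambda_{\min} < 1/n$ is used, so I would flag it explicitly. Everything else, namely the two membership verifications and the uniqueness of the projection onto a nonempty closed convex set, is routine and requires no further estimates.
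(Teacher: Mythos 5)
Your proposal is correct and follows essentially the same route as the paper's proof: both reparametrize $\Delta(\lambda_{\min})$ as the affine image $(1-n\lambda_{\min})\Delta_0 + \lambda_{\min}\one$ and change variables inside the $\argmin$ defining the projection. Your write-up merely makes explicit the steps the paper leaves implicit, namely the bijection checks, the factoring $\|z-\lambda\| = (1-n\lambda_{\min})\,\|w - \phi^{-1}(\lambda)\|$, and the uniqueness of projection onto a nonempty closed convex set.
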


\begin{proof}
The projection on the left hand side of \eqref{eq:prop_gen_afl} can be written as:
\begin{align}\label{eq:argmin}
\proj_{\Delta(\lambda_{\min})}(\lambda) &= \argmin_{\lambda' \in \Delta(\lambda_{\min})} \|\lambda - \lambda'\|.
\end{align}
Note that $\Delta(\lambda_{\min})$ can also be rewritten as:
\begin{align}
\Delta(\lambda_{\min}) &= \left\{\lambda \in \Rb^n: \one^\top \left(\frac{\lambda - \lambda_{\min}\one}{1 - n \lambda_{\min}}\right) = 1, \frac{\lambda - \lambda_{\min}\one}{1 - n \lambda_{\min}}\geq 0 \right\} \nonumber \\
&= \{(1 - n\lambda_{\min})t + \lambda_{\min} \one: t\in \Delta\}.
\end{align}
Therefore, we can rewrite:
\begin{align}
t = \frac{\lambda - \lambda_{\min}\one}{1 - n \lambda_{\min}}, \, t' = \frac{\lambda' - \lambda_{\min}\one}{1 - n \lambda_{\min}}.
\end{align}
With this transformation we can obtain \eqref{eq:prop_gen_afl}. 
\end{proof}

\begin{algorithm}
\caption{Generalized agnostic federated learning (Gen-AFL)}
\label{alg:AFL}
{\bfseries Input: } global epoch $T$, client number $n$, loss function $f_i$, number of samples $n_i$ for client $i$, initial global model $\theta_0$, local step number $E$, learning rate $\eta_\theta$, $\eta_\lambda$, extrapolation parameter $\lambda_{\min} < 1/n$ \\
Let $\lambda_i = \frac{n_i}{N}$ {\bf for} $i$ in $0, 1, \dots, n-1$ \\
\For{$t$ in $0, 1 \dots T - 1$}
{randomly select $\Sc_t \subseteq [n]$ \\
$\x{i}_{t} = \theta_t$ for $i\in \Sc_t$, $N = \sum_{i\in \Sc_t} n_i$ \\
\For(\tcp*[h]{in parallel}){$i$ in $\Sc_t$}
{starting from $\x{i}_{t}$, take $E$ gradient steps on $f_i$ to find $\x{i}_{t+1}$, with learning rate $\eta_\theta$}
$\theta_{t+1} = \sum_{i\in \Sc_t} \lambda_i \x{i}_{t+1}$ \\
$\lambda' = (\lambda + \eta_\lambda f - \lambda_{\min}\one)/(1 - n \lambda_{\min})$ \\
$\lambda = (1 - n \lambda_{\min}){\rm proj}_{\Delta}(\lambda') + \lambda_{\min} \one$ 
}
{\bfseries Output:} global model $\theta_T$
\end{algorithm}

\subsection{Variance Minimization}\label{app:v-rex}

Another extrapolation algorithm proposed in \cite{krueger2021out} is called V-REx, which aims to minimize a linear combination of the variance and the mean:
\begin{align}\label{eq:v-rex}
\min_\theta\  {\rm mean}(F(\theta)) + \beta {\rm var}(F(\theta))  = \frac{1}{n}\sum_i f_i(\theta) +  \frac{\beta}{n} \sum_{i} \left(f_i(\theta) - \frac{1}{n} \sum_j f_j(\theta)\right)^2,
\end{align}
with $\beta > 0$. This objective, in addition to minimizing the total loss, reduces the performance difference among clients, as measured by the variance. Let us derive a federated learning algorithm for this. 

By taking the gradient of \eqref{eq:v-rex} and multiplying by the step size $\eta$, we have:
\begin{align}
\frac{1}{n}\sum_i \eta \nabla f_i(\theta) + 2\frac{\beta}{n} \sum_{i} (f_i(\theta) - \frac{1}{n} \sum_j f_j(\theta))(\eta \nabla f_i(\theta) - \frac{1}{n} \sum_j \eta \nabla f_j(\theta)).
\end{align}
This equation immediately leads to an FL algorithm, by replacing the gradient $\eta \nabla f_i(\theta)$ with the pseudo-gradient (i.e., the opposite of the local update), which we will denote by $\Delta^{(t)}_{i}$:
\begin{align}\label{eq:V-REx}
\frac{1}{n}\sum_i \Delta^{(t)}_{i} + 2\frac{\beta}{n} \sum_{i} (f_i(\theta) - \frac{1}{n} \sum_j f_j(\theta))(\Delta^{(t)}_{i} - \frac{1}{n} \sum_j \Delta^{(t)}_{j}).
\end{align}
The corresponding algorithm is Alg.~\ref{alg:Var_Min}. The gradient steps can be either SGD or Adam. Note that this is a new aggregation rule: instead of simply averaging the local models, it has an additional second term, which relates to the variance of clients. If all clients are identical, this term would vanish. 


\begin{algorithm}
\caption{Variance Minimization (VM)}
\label{alg:Var_Min}
{\bfseries Input: } global epoch $T$, client number $n$, loss function $f_i$, number of samples $n_i$ for client $i$, initial global model $\theta_0$, local step number $E$, learning rate $\eta$, optimizer \\
Let $\lambda_i = \frac{n_i}{N}$ {\bf for} $i$ in $0, 1, \dots, n-1$ \\
\For{$t$ in $0, 1 \dots T - 1$}
{randomly select $\Sc_t \subseteq [n]$ \\
$\x{i}_{t} = \theta_t$ for $i\in \Sc_t$, $N = \sum_{i\in \Sc_t} n_i$ \\
\For(\tcp*[h]{in parallel}){$i$ in $\Sc_t$}
{starting from $\x{i}_{t}$, take $E$ gradient steps on $f_i$ to find $\x{i}_{t+1}$, with learning rate $\eta$\\
compute $\Delta^{(t)}_i =  \x{i}_{t} - \x{i}_{t+1}$}
compute $\Delta_{t} = \frac{1}{n}\sum_i \lambda_i \Delta^{(t)}_{i} + 2\frac{\beta}{n} \sum_{i} \lambda_i (f_i(\theta) - \frac{1}{n} \sum_j f_j(\theta))(\Delta^{(t)}_{i} - \frac{1}{n} \sum_j \Delta^{(t)}_{j})$ \\
$\theta_{t+1} = \theta_t - \Delta_t$ \\
}
{\bfseries Output:} global model $\theta_T$
\end{algorithm}

\subsection{Federated IRM}\label{sec:IRM}

For OOD generalization, there is an influential algorithm based on causal relation, named Invariant Risk Minimization (IRM) \cite{arjovsky2019invariant}. The motivation of IRM is to combat overfitting by avoiding spurious correlations, more commonly obtained with ERM.
It aims to find an invariant classifier that remains optimal across different environments. In deep learning, the predictor is composed of a feature encoder $g$ and a classifier $h$ on top. The formulation of IRM is:
\begin{align}\label{eq:IRM_original}
\min_{h, g} \sum_i f_i(h\circ g), \textrm{ such that } h\in \argmin_{h'} f_i(h'\circ g) \textrm{ for all }i\in [n].
\end{align}

In \cite{arjovsky2019invariant}, the authors propose to treat $h$ as a fixed scalar classifier, and thus the optimization problem becomes:
\begin{align}\label{eq:IRM_v1}
\min_{\theta} \sum_i f_i(\theta) + \beta \|\nabla_{w = 1.0} f_i(w\cdot \theta)\|^2.
\end{align}

Following a similar procedure of FedAvg, we propose the federated version of IRM in Algorithm \ref{alg:fed-irm}.
It is important to mention that other federated versions of IRM have been proposed in \cite{francis2021towards}.
However, these differ from our approach, as they comprise some form of data communication, either through intermediate representations of the feature encoder (as in CausalFed), or through a subset shared with all clients and server (as in CausalFedGSD). For this reason, we did not include these methods in our analysis.
Nevertheless, our Fed-IRM formulation resembles CausalFedGSD, with the distinction that Fed-IRM does not share any data between clients nor the server.

In our formulation of the federated version, when $E=1$ we have the exact agreement to the DG formulation.
However, once local models are trained under IRM with less frequent communication rounds, both in-domain and out-of-domain performance start to degrade, as we further discuss in the results of Rotated MNIST, in section \ref{sec:benchmark}.

\begin{algorithm}[tb]
\caption{Fed-IRM}
\label{alg:fed-irm}
{\bfseries Input:} global epoch $T$, client number $n$, loss function $f_i$ for client $i$, number of samples $n_i$ for client $i$, initial global model $\theta_0$, local step number $E$
\\
\For{$t$ {\bfseries in} $0, 1 \dots T - 1$} 
{randomly select $\Sc_t \subseteq [n]$ \\
$\x{i}_{t} = \theta_{t}$ for $i\in \Sc_t$, $N = \sum_{i\in \Sc_t} n_i$\\
\For(\tcp*[h]{in parallel}){$i$ {\bfseries in} $\Sc_t$}{
    starting from $\x{i}_{t}$, take $E$ gradient steps to minimize $f_i(\theta) + \beta \|\nabla_{w = 1.0} f_i(w\cdot \theta)\|^2$ to find $\x{i}_{t+1}$, with learning rate $\eta$\\
    }
$\theta_{t+1} = \sum_{i\in S_t} \frac{n_i}{N}\x{i}_{t+1}$ \\
}
\textbf{Output:} global model $\theta_T$
\end{algorithm}

\section{Experiments}
\label{sec:appendix-experiments}
We present the experimental details in this section.
Part of the code was based on the DomainBed\footnote{Available at: \texttt{https://github.com/facebookresearch/DomainBed}} test suite, whose license can be found at \texttt{https://github.com/facebookresearch/DomainBed/blob/main/LICENSE}. 
We performed the experiments in a cluster of GPUs, including NVIDIA V100 and P100.

\subsection{Benchmark experiments}\label{sec:benchmark}
In this subsection, we report details over the experiments concerning federated and domain generalization algorithms, as reported in Table \ref{tab:fl_dg_appdx}, also including FedProx, which does not possess a centralized counterpart.
In these experiments, we trained all models for 16000 gradient steps, which assured convergence in all feasible cases.
In federated learning, these gradient steps are further divided into 80 communication rounds, totalizing 200 local steps per round.
Apart from FedProx, which implements the proximal operator with SGD, we train all other algorithms with Adam using the default parameters as reported in the \href{https://pytorch.org/docs/stable/generated/torch.optim.Adam.html#torch.optim.Adam
}{PyTorch library} ($\beta_1 = 0.9$, $\beta_2 = 0.999$), except for the learning rate, which was adapted for each task (see \S~\ref{sec:exp}). 

\begin{table}[h]
\centering
\caption{In domain (ID) and out of domain (OOD) accuracy for Rotated MNIST.}
\label{tab:rminst-da}
\resizebox{\textwidth}{!}{%
\begin{tabular}{llcccccccccccc}
\hline
Training & Augmentation & \multicolumn{2}{c}{ERM/FedAvg} & \multicolumn{2}{c}{GroupDRO/AFL} & \multicolumn{2}{c}{MM-REx/Gen-AFL} & \multicolumn{2}{c}{V-REx/VM} & \multicolumn{2}{c}{IRM/Fed-IRM} & \multicolumn{2}{c}{- /FedProx} \\ \hline
 &  & ID & OOD & ID & OOD & ID & OOD & ID & OOD & ID & OOD & ID & OOD \\ \cline{3-14} 
 & - & 98.4 & \cellcolor[HTML]{EFEFEF}93.2 & 98.2 & \cellcolor[HTML]{EFEFEF}93.7 & 97.2 & \cellcolor[HTML]{EFEFEF}90.0 & 98.3 & \cellcolor[HTML]{EFEFEF}93.1 & 98.2 & \cellcolor[HTML]{EFEFEF}95.6 & - & \cellcolor[HTML]{EFEFEF}- \\
 & 15$^\circ$ rotation & 98.8 & \cellcolor[HTML]{EFEFEF}97.0 & 98.5 & \cellcolor[HTML]{EFEFEF}96.9 & 97.5 & \cellcolor[HTML]{EFEFEF}94.2 & 98.6 & \cellcolor[HTML]{EFEFEF}97.0 & 98.0 & \cellcolor[HTML]{EFEFEF}97.6 & - & \cellcolor[HTML]{EFEFEF}- \\
Centralized & 30$^\circ$ rotation & 98.9 & \cellcolor[HTML]{EFEFEF}98.1 & 98.7 & \cellcolor[HTML]{EFEFEF}97.9 & 96.8 & \cellcolor[HTML]{EFEFEF}95.1 & 98.7 & \cellcolor[HTML]{EFEFEF}98.1 & 98.4 & \cellcolor[HTML]{EFEFEF}98.3 & - & \cellcolor[HTML]{EFEFEF}- \\
 & 45$^\circ$ rotation & 98.9 & \cellcolor[HTML]{EFEFEF}98.3 & 98.5 & \cellcolor[HTML]{EFEFEF}98.0 & 97.1 & \cellcolor[HTML]{EFEFEF}96.6 & 98.6 & \cellcolor[HTML]{EFEFEF}\textbf{98.5} & 98.3 & \cellcolor[HTML]{EFEFEF}98.3 & - & \cellcolor[HTML]{EFEFEF}- \\ \cline{2-14} 
 & Gaussian blur & 97.8 & \cellcolor[HTML]{EFEFEF}92.3 & 98.1 & \cellcolor[HTML]{EFEFEF}92.4 & 96.8 & \cellcolor[HTML]{EFEFEF}89.8 & 98.0 & \cellcolor[HTML]{EFEFEF}92.7 & 97.8 & \cellcolor[HTML]{EFEFEF}95.7 & - & \cellcolor[HTML]{EFEFEF}- \\ \hline
 & - & 97.7 & \cellcolor[HTML]{EFEFEF}90.1 & 97.6 & \cellcolor[HTML]{EFEFEF}92.0 & 97.9 & \cellcolor[HTML]{EFEFEF}92.0 & 97.6 & \cellcolor[HTML]{EFEFEF}89.9 & - & \cellcolor[HTML]{EFEFEF}- & 91.8 & \cellcolor[HTML]{EFEFEF}71.0 \\
 & 15$^\circ$ rotation & 98.6 & \cellcolor[HTML]{EFEFEF}94.2 & 98.1 & \cellcolor[HTML]{EFEFEF}95.3 & 98.3 & \cellcolor[HTML]{EFEFEF}95.1 & 98.3 & \cellcolor[HTML]{EFEFEF}94.0 & - & \cellcolor[HTML]{EFEFEF}- & 92.0 & \cellcolor[HTML]{EFEFEF}76.0 \\
Federated & 30$^\circ$ rotation & 98.8 & \cellcolor[HTML]{EFEFEF}97.2 & 98.4 & \cellcolor[HTML]{EFEFEF}97.2 & 98.8 & \cellcolor[HTML]{EFEFEF}97.3 & 98.7 & \cellcolor[HTML]{EFEFEF}97.3 & - & \cellcolor[HTML]{EFEFEF}- & 92.0 & \cellcolor[HTML]{EFEFEF}83.3 \\
 & 45$^\circ$ rotation & 99.0 & \cellcolor[HTML]{EFEFEF}98.0 & 98.5 & \cellcolor[HTML]{EFEFEF}97.9 & 98.8 & \cellcolor[HTML]{EFEFEF}97.8 & 98.7 & \cellcolor[HTML]{EFEFEF}\textbf{98.3} & - & \cellcolor[HTML]{EFEFEF}- & 91.7 & \cellcolor[HTML]{EFEFEF}86.9 \\ \cline{2-14} 
 & Gaussian blur & 97.1 & \cellcolor[HTML]{EFEFEF}86.9 & 97.0 & \cellcolor[HTML]{EFEFEF}89.6 & 97.5 & \cellcolor[HTML]{EFEFEF}91.6 & 97.3 & \cellcolor[HTML]{EFEFEF}86.7 & - & \cellcolor[HTML]{EFEFEF}- & 88.7 & \cellcolor[HTML]{EFEFEF}67.4 \\ \hline
\end{tabular}%
}
\end{table}

The hyperparameters of the DG algorithms were chosen according to the DomainBed implementation. For the FL algorithms, hyperparameter tuning is used for Gen-AFL as well as VM.
For Gen-AFL, $\lambda_{\min}=-1$ is used for all algorithms, while for VM, $\beta=10$ is used in Rotated MNIST and $\beta=0.5$ is used for PACS and OfficeHome.

The results for Fed-IRM are not reported in Table \ref{tab:rminst-da} due to the challenges in tuning the model locally.
One important distinction from the centralized version is that, in local training, the penalty term comprises only the gradient on the local domain/client.
What we observe is that, even in early communication rounds, the local update does not represent the gradient of the IRM risk well enough.
This is likely due to the sensitivity of the penalty term, since this discrepancy does not significantly affect \textit{FedAvg}/ERM.
Additionally, as we increase the number of local steps, this behavior became more pronounced.
In Rotated MNIST with 45$^\circ$ augmentation, the out-of-domain accuracy of one-step Fed-IRM was reported 91.1\%, while using the standard local step ($E=200$) has yielded 11\%.
Because of this inconsistency, IRM was taken out of the subsequent analyses and experiments.

\begin{table}[h]
\centering
\caption{In domain (ID) and out of domain (OOD) accuracy for PACS.}
\label{tab:pacs-da}
\resizebox{\textwidth}{!}{%
\begin{tabular}{llcccccccccc}
\hline
Training    & Augmentation & \multicolumn{2}{l}{ERM/FedAvg}               & \multicolumn{2}{l}{GroupDRO/AFL}             & \multicolumn{2}{l}{MM-Rex/Gen-AFL}   & \multicolumn{2}{l}{V-Rex/VM}        & \multicolumn{2}{l}{-/FedProx}       \\ \hline
            &              & ID   & OOD                                   & ID   & OOD                                   & ID   & OOD                          & ID   & OOD                          & ID   & OOD                          \\ \cline{3-12} 
            & -            & 92.9 & \cellcolor[HTML]{EFEFEF}50.9          & 92.9 & \cellcolor[HTML]{EFEFEF}51.0          & 91.1 & \cellcolor[HTML]{EFEFEF}62.3 & 92.4 & \cellcolor[HTML]{EFEFEF}52.9 & -    & \cellcolor[HTML]{EFEFEF}-    \\
Centralized & Weak         & 92.3 & \cellcolor[HTML]{EFEFEF}54.2          & 90.8 & \cellcolor[HTML]{EFEFEF}57.8          & 92.6 & \cellcolor[HTML]{EFEFEF}67.3 & 93.8 & \cellcolor[HTML]{EFEFEF}62.8 & -    & \cellcolor[HTML]{EFEFEF}-    \\
            & Moderate     & 94.0 & \cellcolor[HTML]{EFEFEF}71.4          & 93.9 & \cellcolor[HTML]{EFEFEF}\textbf{74.6} & 90.1 & \cellcolor[HTML]{EFEFEF}69.7 & 92.9 & \cellcolor[HTML]{EFEFEF}72.9 & -    & \cellcolor[HTML]{EFEFEF}-    \\ \hline
            & -            & 90.7 & \cellcolor[HTML]{EFEFEF}71.9          & 89.4 & \cellcolor[HTML]{EFEFEF}73.4          & 90.6 & \cellcolor[HTML]{EFEFEF}71.5 & 89.3 & \cellcolor[HTML]{EFEFEF}69.7 & 76.7 & \cellcolor[HTML]{EFEFEF}48.6 \\
Federated   & Weak         & 92.4 & \cellcolor[HTML]{EFEFEF}73.3          & 92.8 & \cellcolor[HTML]{EFEFEF}72.4          & 92.4 & \cellcolor[HTML]{EFEFEF}70.5 & 91.4 & \cellcolor[HTML]{EFEFEF}70.0 & 83.4 & \cellcolor[HTML]{EFEFEF}51.2 \\
            & Moderate     & 93.5 & \cellcolor[HTML]{EFEFEF}\textbf{80.4} & 94.1 & \cellcolor[HTML]{EFEFEF}79.8          & 93.0 & \cellcolor[HTML]{EFEFEF}79.5 & 91.7 & \cellcolor[HTML]{EFEFEF}78.6 & 77.3 & \cellcolor[HTML]{EFEFEF}54.1 \\ \hline
\end{tabular}
}
\end{table}

Generally, more frequent communication is beneficial in federated learning, whereas in the limit of local step $E=1$ we have the equivalent to centralized training.
Surprisingly, the results in PACS indicate that the federated regime, despite having slightly lower in-domain performance, significantly outperformed centralized training in terms of out-of-domain accuracy.
We conjecture that, in this case, alternating between local training and recurrent model aggregation may act as a regularizer (similar to early stopping and dropout) that is favorable for the test domain at hand.
This interesting phenomenon may require further investigation. 

\begin{table}[h]
\centering
\caption{In domain (ID) and out of domain (OOD) accuracy for OfficeHome.}
\label{tab:office-home-da}
\resizebox{\textwidth}{!}{%
\begin{tabular}{llcccccccccc}
\hline
Training    & Augmentation & \multicolumn{2}{l}{ERM/FedAvg}      & \multicolumn{2}{l}{GroupDRO/AFL}             & \multicolumn{2}{l}{MM-Rex/Gen-AFL}   & \multicolumn{2}{l}{V-Rex/VM}                 & \multicolumn{2}{l}{-/FedProx}       \\ \hline
            &              & ID   & OOD                          & ID   & OOD                                   & ID   & OOD                          & ID   & OOD                                   & ID   & OOD                          \\ \cline{3-12} 
            & -            & 74.1 & \cellcolor[HTML]{EFEFEF}38.0 & 73.4 & \cellcolor[HTML]{EFEFEF}37.9          & 48.4 & \cellcolor[HTML]{EFEFEF}24.2 & 73.1 & \cellcolor[HTML]{EFEFEF}38.8          & -    & \cellcolor[HTML]{EFEFEF}-    \\
Centralized & Weak         & 70.3 & \cellcolor[HTML]{EFEFEF}36.8 & 73.2 & \cellcolor[HTML]{EFEFEF}38.5          & 41.7 & \cellcolor[HTML]{EFEFEF}20.6 & 74.6 & \cellcolor[HTML]{EFEFEF}40.7          & -    & \cellcolor[HTML]{EFEFEF}-    \\
            & Moderate     & 74.8 & \cellcolor[HTML]{EFEFEF}41.1 & 74.1 & \cellcolor[HTML]{EFEFEF}40.5          & 41.5 & \cellcolor[HTML]{EFEFEF}23.0 & 76.0 & \cellcolor[HTML]{EFEFEF}\textbf{41.5} & -    & \cellcolor[HTML]{EFEFEF}-    \\ \hline
            & -            & 70.5 & \cellcolor[HTML]{EFEFEF}35.1 & 70.3 & \cellcolor[HTML]{EFEFEF}35.6          & 70.3 & \cellcolor[HTML]{EFEFEF}35.9 & 66.9 & \cellcolor[HTML]{EFEFEF}33.5          & 41.5 & \cellcolor[HTML]{EFEFEF}17.1 \\
Federated   & Weak         & 69.2 & \cellcolor[HTML]{EFEFEF}37.4 & 69.4 & \cellcolor[HTML]{EFEFEF}36.7          & 69.4 & \cellcolor[HTML]{EFEFEF}36.8 & 69.1 & \cellcolor[HTML]{EFEFEF}35.7          & 39.6 & \cellcolor[HTML]{EFEFEF}16.7 \\
            & Moderate     & 71.7 & \cellcolor[HTML]{EFEFEF}40.0 & 71.6 & \cellcolor[HTML]{EFEFEF}\textbf{40.4} & 72.0 & \cellcolor[HTML]{EFEFEF}38.6 & 70.6 & \cellcolor[HTML]{EFEFEF}39.2          & 39.8 & \cellcolor[HTML]{EFEFEF}18.4 \\ \hline
\end{tabular}
}
\end{table}

As for OfficeHome, the task is much harder than in PACS.
Despite having slightly more samples, in OfficeHome, these are distributed along a much larger set of labels than in PACS.
In federated learning, this challenge becomes even harder.
After the dataset split, the resulting clients possess a small fraction of examples for each class, which results in poor generalization for both in and out-of-domain.

The results of OfficeHome and PACS differ from the ones found on \cite{gulrajani2020search} for multiple reasons.
The first distinction is model capacity, as in \cite{gulrajani2020search}, experiments were conducted with ResNet50 networks instead of ResNet18.
Furthermore, \cite{gulrajani2020search} further performs an extensive hyperparameter search as well as model selection based on different criteria.

\subsection{Data augmentation and local steps}
In this experiment, the objective is to validate whether data augmentation could aid the homogenization of clients, thus enabling the federated training with sparser communication rounds.
For this, we trained \textit{FedAvg} with varying local steps, communication rounds and data transformations over Rotated MNIST, whose results are reported in Table \ref{tab:local-steps}.

\begin{table*}[h]
\centering
\caption{The effect of different local steps on in and out of domain accuracy of \textit{FedAvg} across different augmentation schemes over Rotated MNIST.}
\label{tab:local-steps}
\resizebox{\textwidth}{!}{%
\begin{tabular}{cccccccccccccc} 
\cline{3-14}
\multicolumn{2}{l}{}                                                       & \multicolumn{12}{c}{Augmentation}                                                                                                                                                                                                                                                                          \\
\multicolumn{1}{l}{}                     & \multicolumn{1}{l}{}            & \multicolumn{2}{c}{-}                           & \multicolumn{2}{c}{15$^\circ\,$rotation}        & \multicolumn{2}{c}{30$^\circ\,$rotation}        & \multicolumn{2}{c}{45$^\circ\,$rotation}        & \multicolumn{2}{c}{60$^\circ\,$rotation}        & \multicolumn{2}{c}{Gaussian Blur}                \\ 
\hline
\multicolumn{1}{l}{Communication rounds} & \multicolumn{1}{l}{Local steps} & ID   & OOD                                      & ID   & OOD                                      & ID   & OOD                                      & ID   & OOD                                      & ID   & OOD                                      & ID   & OOD                                       \\ 
\hline
2000                                     & 5                               & 97.6 & {\cellcolor[rgb]{0.937,0.937,0.937}}91.2 & 98.1 & {\cellcolor[rgb]{0.937,0.937,0.937}}95.0 & 98.3 & {\cellcolor[rgb]{0.937,0.937,0.937}}97.2 & 98.4 & {\cellcolor[rgb]{0.937,0.937,0.937}}98.0 & 98.1 & {\cellcolor[rgb]{0.937,0.937,0.937}}97.9 & 97.4 & {\cellcolor[rgb]{0.937,0.937,0.937}}89.4  \\
2000                                     & 10                              & 98.0 & {\cellcolor[rgb]{0.937,0.937,0.937}}92.4 & 98.4 & {\cellcolor[rgb]{0.937,0.937,0.937}}95.7 & 98.6 & {\cellcolor[rgb]{0.937,0.937,0.937}}97.9 & 98.6 & {\cellcolor[rgb]{0.937,0.937,0.937}}98.3 & 98.5 & {\cellcolor[rgb]{0.937,0.937,0.937}}98.6 & 976  & {\cellcolor[rgb]{0.937,0.937,0.937}}90.6  \\
320                                      & 50                              & 97.9 & {\cellcolor[rgb]{0.937,0.937,0.937}}91.9 & 98.4 & {\cellcolor[rgb]{0.937,0.937,0.937}}95.2 & 98.5 & {\cellcolor[rgb]{0.937,0.937,0.937}}97.4 & 98.6 & {\cellcolor[rgb]{0.937,0.937,0.937}}98.2 & 98.6 & {\cellcolor[rgb]{0.937,0.937,0.937}}98.3 & 976  & {\cellcolor[rgb]{0.937,0.937,0.937}}90.5  \\
320                                      & 100                             & 98.1 & {\cellcolor[rgb]{0.937,0.937,0.937}}91.6 & 98.6 & {\cellcolor[rgb]{0.937,0.937,0.937}}95.7 & 98.8 & {\cellcolor[rgb]{0.937,0.937,0.937}}97.9 & 98.9 & {\cellcolor[rgb]{0.937,0.937,0.937}}98.5 & 98.9 & {\cellcolor[rgb]{0.937,0.937,0.937}}98.7 & 97.8 & {\cellcolor[rgb]{0.937,0.937,0.937}}90.7  \\
320                                      & 200                             & 98.1 & {\cellcolor[rgb]{0.937,0.937,0.937}}92.3 & 98.6 & {\cellcolor[rgb]{0.937,0.937,0.937}}95.5 & 98.8 & {\cellcolor[rgb]{0.937,0.937,0.937}}97.9 & 98.9 & {\cellcolor[rgb]{0.937,0.937,0.937}}98.6 & 98.7 & {\cellcolor[rgb]{0.937,0.937,0.937}}98.8 & 97.4 & {\cellcolor[rgb]{0.937,0.937,0.937}}90.8  \\
320                                      & 400                             & 97.7 & {\cellcolor[rgb]{0.937,0.937,0.937}}91.3 & 98.5 & {\cellcolor[rgb]{0.937,0.937,0.937}}95.9 & 98.9 & {\cellcolor[rgb]{0.937,0.937,0.937}}98.0 & 98.9 & {\cellcolor[rgb]{0.937,0.937,0.937}}98.5 & 99.0 & {\cellcolor[rgb]{0.937,0.937,0.937}}98.7 & 97.5 & {\cellcolor[rgb]{0.937,0.937,0.937}}90.8  \\
320                                      & 800                             & 98.0 & {\cellcolor[rgb]{0.937,0.937,0.937}}92.2 & 98.7 & {\cellcolor[rgb]{0.937,0.937,0.937}}96.2 & 98.8 & {\cellcolor[rgb]{0.937,0.937,0.937}}97.9 & 99.0 & {\cellcolor[rgb]{0.937,0.937,0.937}}98.6 & 98.9 & {\cellcolor[rgb]{0.937,0.937,0.937}}98.9 & 97.7 & {\cellcolor[rgb]{0.937,0.937,0.937}}91.5  \\
2                                        & 128000                          & 94.3 & {\cellcolor[rgb]{0.937,0.937,0.937}}71.0 & 97.1 & {\cellcolor[rgb]{0.937,0.937,0.937}}87.0 & 98.4 & {\cellcolor[rgb]{0.937,0.937,0.937}}94.9 & 98.8 & {\cellcolor[rgb]{0.937,0.937,0.937}}97.4 & 98.9 & {\cellcolor[rgb]{0.937,0.937,0.937}}98.2 & 93.5 & {\cellcolor[rgb]{0.937,0.937,0.937}}71.9  \\
\hline
\end{tabular}
}
\end{table*}

\subsection{Data augmentation and data heterogeneity}
In this analysis, we use the models trained in the experiment aforementioned and compute the gradient norm with respect to each domain. As previously mentioned in \S~\ref{sec:grad_norm}, in the training domains, the gradient is obtained over the validation set of each domain, while the full set is used for the test domain. The results for Rotated MNIST, PACS and OfficeHome are reported in Tables \ref{tab:gradnorm-rmnist}, \ref{tab:gradnorm-pacs}, and \ref{tab:gradnorm-office-home}, respectively.

\begin{table*}[!ht]
\centering
\caption{Gradient norms of \textit{FedAvg} over the different train and test clients of Rotated MNIST, with varying degrees of augmentation.}
\label{tab:gradnorm-rmnist}
\begin{tabular}{lcccccccc}
\cmidrule[\heavyrulewidth]{2-9}
 & \multicolumn{5}{c}{Train clients} & Test client & \multicolumn{1}{l}{} & \multicolumn{1}{l}{} \\ \hline
Augmentation & 0° & 15° & 30° & 45° & 60° & 75° & mean & std \\ \hline
- & 0.940 & 0.233 & 0.120 & 0.149 & 0.532 & 1.485 & 0.577 & 0.542 \\
15$^\circ$ rotation & 0.401 & 0.107 & 0.074 & 0.077 & 0.226 & 0.583 & 0.245 & 0.208 \\
30$^\circ$ rotation & 0.174 & 0.087 & 0.068 & 0.069 & 0.132 & 0.248 & 0.130 & 0.071 \\
45$^\circ$ rotation & 0.114 & 0.081 & 0.088 & 0.066 & 0.090 & 0.129 & 0.095 & 0.023 \\
60$^\circ$ rotation & 0.093 & 0.086 & 0.093 & 0.069 & 0.086 & 0.121 & 0.091 & 0.017 \\ \hline
Gaussian blur & 1.389 & 0.286 & 0.131 & 0.200 & 0.831 & 2.379 & 0.869 & 0.881 \\ \hline
\end{tabular}%
\end{table*}
\newpage

\begin{table}[ht]
\centering
\caption{Gradient norms of \textit{FedAvg} over the different train and test clients of PACS, with varying degrees of augmentation.}
\label{tab:gradnorm-pacs}
\begin{tabular}{lcccccc}
\cline{2-7}
 & \multicolumn{3}{c}{Train clients} & Test client & \multicolumn{1}{l}{} & \multicolumn{1}{l}{} \\ \hline
Augmentation & P & A & C & S & mean & std \\ \hline
- & 7.629 & 14.416 & 5.696 & 17.067 & 11.202 & 3.865 \\
Weak & 5.999 & 7.659 & 3.444 & 11.344 & 7.112 & 1.871 \\
Moderate & 5.937 & 5.632 & 4.300 & 9.112 & 6.245 & 0.857 \\ \hline
\end{tabular}%

\vspace{2em}
\caption{Gradient norms of \textit{FedAvg} over the different train and test clients of OfficeHome, with varying degrees of augmentation.}
\label{tab:gradnorm-office-home}
\begin{tabular}{lcccccc}
\cline{2-7}
             & \multicolumn{3}{c}{Train clients} & Test client & \multicolumn{2}{c}{Total}  \\ \hline
Augmentation & Art      & Product   & RealWorld  & Clipart     & mean                 & std                  \\ \hline
-            & 15.162   & 6.232     & 8.465      & 22.011      & 12.967               & 4.083                \\
Weak         & 16.490   & 4.972     & 8.318      & 18.003      & 11.946               & 4.942                \\
Moderate     & 15.073   & 5.051     & 7.363      & 16.482      & 10.992               & 4.381                \\ \hline
\end{tabular}
\end{table}
\end{document}